\title{On Looking for Local Expansion Invariants\\in Argumentation Semantics: a Preliminary Report}
\author{
	Stefano Bistarelli \and Francesco Santini\\
	Universit\`a degli Studi di Perugia, Italy\\
	stafano.bistarelli@unipg.it\\
	francesco.santini@unipg.it\\
	\And
	Carlo Taticchi\\
	Gran Sasso Science Institute, Italy\\
	carlo.taticchi@gssi.it
}
\newtheorem{definition}{Definition}
\newtheorem{proposition}{Proposition}
\newtheorem{theorem}{Theorem}
\newtheorem{corollary}{Corollary}
\newtheorem{remark}{Remark}
\newtheorem{example}{Example}
\newcommand{\lin}{{\fontfamily{lmss}\fontseries{b}\fontshape{n}\selectfont in}}
\newcommand{\lout}{{\fontfamily{lmss}\fontseries{b}\fontshape{n}\selectfont out}}
\newcommand{\lundec}{{\fontfamily{lmss}\fontseries{b}\fontshape{n}\selectfont undec}}
\newcommand{\lsin}[2]{{\fontfamily{pag}\fontseries{b}\fontshape{sc}\selectfont in$_{#1}(#2)$}}
\newcommand{\lsout}[2]{{\fontfamily{pag}\fontseries{b}\fontshape{sc}\selectfont out$_{#1}(#2)$}}
\newcommand{\lsundec}[2]{{\fontfamily{pag}\fontseries{b}\fontshape{sc}\selectfont undec$_{#1}(#2)$}}
\newcommand{\allf}{\mathcal{F_A}}
\newcommand{\af}{$G=\langle\mathcal{A},\mathcal{R}\rangle~$}
\newcommand{\quot}[1]{``#1''}
\newcommand{\fig}[1]{Fig.~\ref{fig:#1}}
\begin{document}
	
	\maketitle
	
	\begin{abstract}
		We study invariant local expansion operators for conflict-free and admissible sets in Abstract Argumentation Frameworks (AFs). Such operators are directly applied on AFs, and are invariant with respect to a chosen ``semantics'' (that is w.r.t.\ each of the conflict free/admissible set of arguments). Accordingly, we derive a definition of robustness for AFs in terms of the number of times such operators can be applied without producing any change in the chosen semantics.
	\end{abstract}
	
	\section{Introduction}
	An \emph{Abstract Argumentation Framework}~\cite{DBLP:journals/ai/Dung95} (\emph{AF}), is represented by a pair $\langle \mathcal{A}, \mathcal{R} \rangle$ consisting of a set of arguments and a binary relationship of attack defined among them. Given a framework, it is possible to examine the question on which set(s) of arguments can be accepted, hence collectively surviving the conflicts defined by $\mathcal{R}$. A very simple example of AF is $\langle \{a, b\}, \{(a, b), (b, a)\} \rangle$, where two arguments $a$ and $b$ attack each other. In this case, each of the two positions represented by either $\{a\}$ or $\{b\}$ can be intuitively valid.
	AFs can also provide a basis for handling the evolution of situations in which instances of particular problems undergo changes; variations on the underlying information can be interpreted as modifications in the corresponding graph. Such modifications can be performed through operations of addition or subtraction of nodes and edges in the AF. As one can expect, introducing these changes might lead to obtain different semantics for the considered AF. We can classify the operations it is possible to perform on a framework in two types: the ones that change the semantics of the system and the ones that do not. In this paper, we focus on this latter type of operations (which leave the semantics unchanged), and reducing to the case of addition (or subtraction) of an attack.
	
	Our aim is to study a set of local expansion~\cite{DBLP:journals/ai/Baumann12} operators with respect to which the semantics is not altered.
	%Often, after getting to some upshots, one is interested in preserving the result even against further possible changes in the base of information: 
	Due to the dynamic nature of certain problems, settling for a solution (in a particular AF) could not be sufficient to guarantee a good outcome in case the problem evolves. In a dynamic setting, it may happen that new arguments can change the meaning (and the outcome) of the conversation itself. Think, for example, to a negotiation or a persuasion dialogue. With invariant operators at dispose, one could test and possibly \quot{enforce}~\cite{DBLP:conf/comma/BaumannB10} the strength of its position. Also, invariant operators could be successfully exploited for computing, in an efficient way, the semantics of an evolving AF.
	%This problem is of a particular interest: the next International Competition on Computational Models of Argumentation will have, for example, a special track on the computation of semantics for a succession of AFs obtained by adding attacks one by one.
	
	The main content of this paper is a the notion of \quot{robustness} for AFs. The main idea is that every argument (and set of arguments) is more or less suitable to undergo changes in the belief base~\cite{kern}. Robustness gives a measure of how many changes an AF can withstand before changing its semantics. 
	The main part the work we carried out, concerns the study of particular modifying operators for which the semantics is invariant: through such operators it is possible to bring changes in the structure of a framework without changing its meaning.
 	
	In particular, we study such operators for \emph{conflict-free} and \emph{admissible sets}~\cite{DBLP:journals/ai/Dung95}, since they are at the centre of any classical semantics and they have never been studied before in these terms. Differently from other works done in this direction (see the Related Work section), we consider how difficult is to modify the whole set of extensions instead of a single one, for instance as in \cite{DBLP:conf/tafa/RienstraST15}.
	
	This paper is structured as follows. We first summarise the necessary notions of Abstract Argumentation, by presenting extension-based~\cite{DBLP:journals/ai/Dung95} and \emph{labelling-based}~\cite{Caminada2006} semantics. We then describe the characteristics of the invariant operators we want to design and we define such operators for conflict-free and admissible sets. We conclude the paper by showing that our approach is novel w.r.t.\ the related work, and by providing conclusive thoughts and ideas about future work.
	
	%Finally, the motivations behind our study involve several perspectives of implementation of such operators in order to deal with different problem related to argumentation. For instance, frameworks in which semantics is invariant with respect to the same operations can be grouped in the same class: in this way, further properties of AFs could be studied. The notion of robustness \cite{flairs1} can be exploited to look for strong clusters of arguments among frameworks of the same class. At the same time a measure of robustness can be defined starting from the number of invariant operations admitted.
	
	%\section{Argumentation Theory}
	%Argumentation deals with both the problems of representing knowledge and deriving information from it, using inference and logic to draw conclusions in a fashion really close to the human way of reasoning. For this, Argumentation can be applied in a wide area of different disciplines concerning civil debate, dialogue, conversation, and persuasion and can be considered the means by which people protect their beliefs. There are several types of argumentation: conversational, mathematical, scientific, interpretive, legal and political, each of which related to different topics that requires to derive, justify, prove or disambiguate information. In computer science, argumentation has been used to provide a proof-theoretic semantics for non-monotonic logic, starting with the influential work of Dung \cite{DBLP:journals/ai/Dung95} on Abstract Argumentation Frameworks.
	
	\section{Abstract Argumentation Frameworks}
	By neglecting the internal structure of each argument (e.g., \emph{premises} and a \emph{claim}), the framework becomes ``abstract'', that is we are not interested in the meaning of arguments any more, but we just focus on their relations and we look for general properties. Hence, snapshots of such discussions can be caught by using \emph{Abstract Argumentation Frameworks}~\cite{DBLP:journals/ai/Dung95}, namely directed graphs that clearly show the exchange of opinions as attacks between arguments/nodes. Then, working with an AF means to identify subsets of nodes, called \emph{extensions}, which share certain properties, according to a given \emph{semantics}. Below, we give the fundamental definitions for AFs and extension-based semantics.
	
	\begin{definition}[Abstract Argumentation Framework~\cite{DBLP:journals/ai/Dung95}]
		An Abstract Argumentation Framework is a pair \af where $\mathcal{A}$ is a set of arguments and $\mathcal{R}$ is a binary relation on $\mathcal{A}$, i.e., $\mathcal{R} \subseteq \mathcal{A} \times \mathcal{A}$.
	\end{definition}
	
	%From now on we are going to refer to Abstract Argumentation Frameworks simply as Argumentation Frameworks. 
	We denote the set of all AFs with the set of argument $\mathcal{A}$ with $\mathcal{F_A}$. For two arguments $a,b \in \mathcal{A}$, $(a,b) \in \mathcal{R}$ represents an attack directed from $a$ against $b$. We can interchangeably use $a \rightarrow b$. Moreover, we say that a set of arguments $E \in \mathcal{A}$ attacks an argument $a$ if $a$ is attacked by an argument $b \in E$.
	
	\begin{definition}[Acceptable argument \cite{DBLP:journals/ai/Dung95}]
		An argument $a \in \mathcal{A}$ is acceptable with respect to $E \subseteq \mathcal{A}$ if and only if $\forall b \in \mathcal{A}$ s.t.\ $(b,a) \in \mathcal{R}$, $\exists c \in E$ s.t.\ $(c,b) \in \mathcal{R}$ and we say that $E$ defends $a$.
	\end{definition}
	
	%\subsection{Semantics}
	Dung defines several semantics of acceptance for computing subsets of arguments, called extensions, that share characteristic properties. Respectively, $\mathit{cf}$, $\mathit{adm}$, $\mathit{com}$, $\mathit{stb}$, $\mathit{prf}$ and $\mathit{gde}$ stand for conflict-free, admissible, complete, stable, preferred, and grounded extensions.
	
	\begin{definition}[Extension-based semantics]
		Let \af $\in \mathcal{F_A}$ be an AF. A set $E \subseteq \mathcal{A}$ is conflict-free in $G$, denoted $E \in S_{cf}(G)$ if and only if there are no $a,b \in \mathcal{A}$ s.t.\ $(a,b) \in \mathcal{R}$. For $E \in S_{cf}(G)$ it holds that:
		
		\begin{itemize}
			\item $E \in S_{adm}(G)$ if each $a \in E$ is defended by E;
			\item $E \in S_{com}(G)$ if $E \in S_{adm}(G)$ and $\forall a \in \mathcal{A}$ defended by $E$, $a \in E$;
			\item $E \in S_{stb}(G)$ if $\forall a \in \mathcal{A} \setminus E $, $S$ attacks $a$;
			\item $E \in S_{prf}(G)$ if $E \in S_{adm}(G)$ and $\nexists E' \in S_{adm}(G)$ s.t.\ $E \subset E'$;
			\item $E = S_{gde}(G)$ if $E \in S_{com}(G)$ and $\nexists E' \in S_{com}(G)$ s.t.\ $E' \subset E$.
		\end{itemize}
		
	\end{definition}
	
	For the sake of consistency, In the following we will interchangeably use the terms semantics and sets (as conflict-free and admissible sets), annotating them simply with $S$: even if the results developed in this paper concern sets, the same methodology can be applied to semantics as well (see conclusions). Many of these semantics exploit the notion of defence in order to decide whether an argument is part of an extension or not. Such a phenomenon, for which an argument becomes accepted in some extension after being defended by another argument belonging to that extension, is known as \quot{reinstatement}. This mechanism plays a fundamental role when one needs to understand how a semantics changes in case an AF is modified. We exploit the notion of reinstatement labelling introduced in~\cite{Caminada2006}.
	
	\subsection{Reinstatement Labelling}\label{reinst}
	Reinstatement is the process through which a non-accepted argument $a$ of a framework becomes accepted w.r.t.\ a  certain extension when its attackers are defeated by $a$ itself or other arguments. Dung provides several approaches for dealing with reinstatement, like stable, preferred, complete, and grounded semantics. In \cite{Caminada2006}, Caminada strengthens Dung's theory by introducing  \quot{reinstatement labelling}, namely a function that partitions the set of arguments of an AF into three classes: \quot{in}, \quot{out}\ and \quot{undec}. An argument is labelled \lin\ if all its attackers are labelled \lout, and it is labelled \lout\ if it has at least an attacker which is labelled \lin. Otherwise, it is labelled \lundec.
	
	Caminada proved that reinstatement labelling coincides with Dung’s notion of complete extension. In the same way, further restrictions allow for obtaining stable, grounded and preferred semantic as well. Moreover, the same author specifies an additional semantics, called semi-stable, and provides a partial ordering between the various semantics. In particular, we know that: every stable extension is a semi-stable extension, every semi-stable extension is a preferred extension, every preferred extension is a complete extension, and every grounded extension is a complete extension. In Def.~\ref{def:reinstatement} we provide a formal definition of reinstatement labelling.
	
	\begin{figure}[]
		\centering
		\includegraphics[scale=0.14]{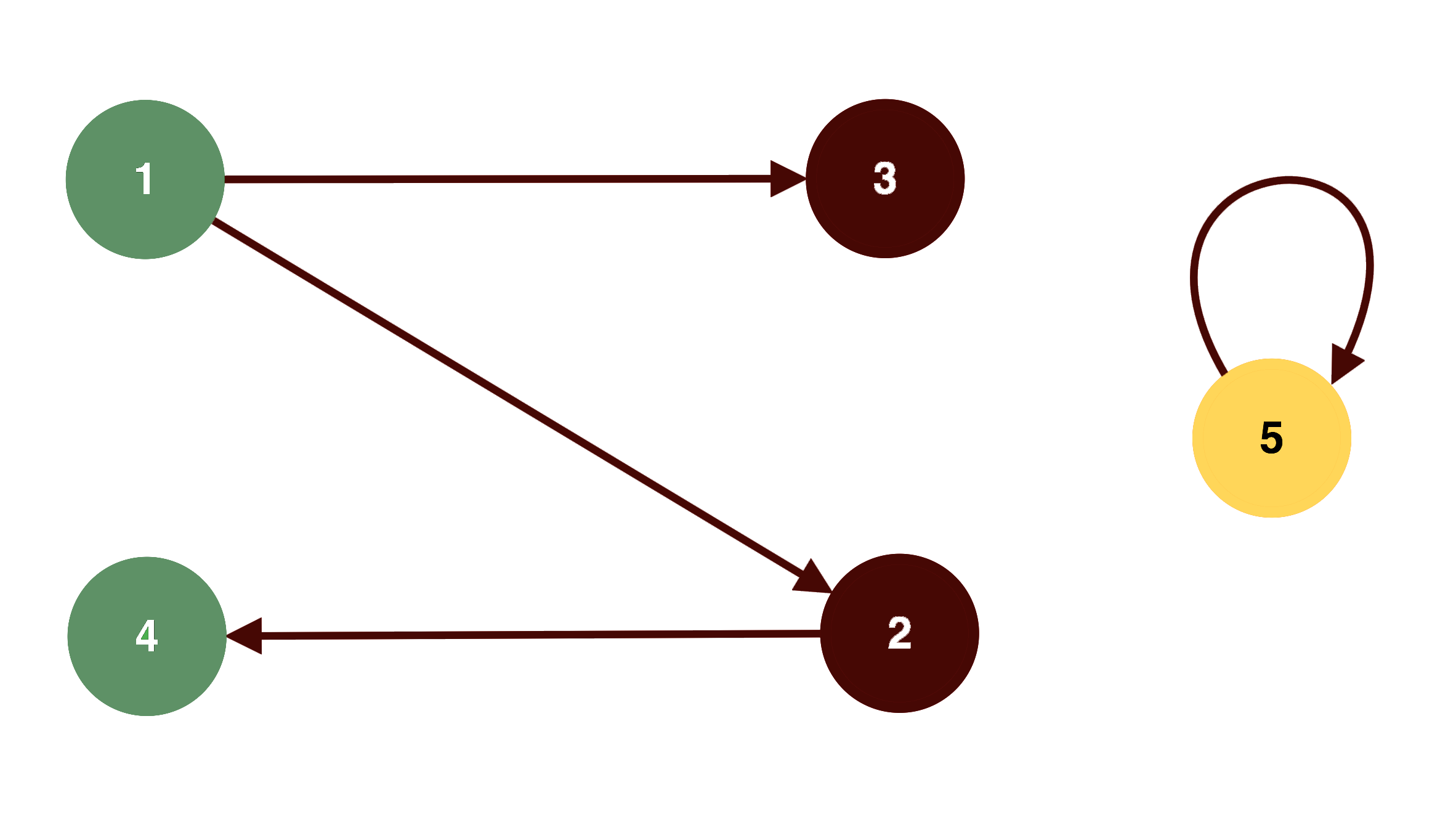}
		\caption{An AF-labelling in which $in(\mathcal{L}) = \{1,4\}$, $out(\mathcal{L}) = \{2,3\}$, and $undec(\mathcal{L}) = \{5\}$.}
		\label{fig:lab1}
	\end{figure}
	
	\begin{definition}[AF-labelling~\cite{Caminada2006}]\label{def:reinstatement}
		Let \af be a Dung-style argumentation framework. A labelling is a function $\mathcal{L} : \mathcal{A} \rightarrow \{in, out, undec\}$ such that $in(\mathcal{L}) \equiv \{a \in \mathcal{A}$ s.t.\ $\mathcal{L}(a) = in \}$, $out(\mathcal{L}) \equiv \{a \in \mathcal{A}$ s.t.\ $\mathcal{L}(a) = out\}$ and $undec(\mathcal{L}) \equiv \{a \in \mathcal{A}$ s.t.\ $\mathcal{L}(a) = undec\}$.
		We say that $\mathcal{L}$ is a reinstatement labelling if and only if it satisfies the following:
		
		\begin{itemize}
			\item $\forall a \in \mathcal{A}$ s.t.\ $a \in in(\mathcal{L}) $, $\forall b \in \mathcal{A}$ s.t.\ $(b,a) \in \mathcal{R}, b \in out(\mathcal{L}) $;
			\item $\forall a \in \mathcal{A}$ s.t.\ $a \in out(\mathcal{L}) $, $\exists b \in \mathcal{A}$ s.t.\ $(b,a) \in \mathcal{R}, b \in in(\mathcal{L}) $.
		\end{itemize}
		
	\end{definition}
	
	In \fig{lab1} we show an example of reinstatement labelling obtained with the web interface of ConArg\footnote{http://www.dmi.unipg.it/conarg}~\cite{DBLP:conf/clima/BistarelliRS14,DBLP:conf/comma/BistarelliRS16,DBLP:conf/at/BistarelliS12,DBLP:conf/ictai/BistarelliS11,DBLP:conf/tafa/BistarelliS11}: \lin\ ($\{1,4\}$), \lout\ ($\{2,3\}$) and \lundec\ ($\{5\}$) arguments are coloured in three different colours. Note that there exist more than one possible labelling for an AF. Moreover, there exists a connection between reinstatement labellings and the Dung-style semantics. This connection is summarised in Tab.~\ref{table:r_lab}.
	
	\begin{table}[]
		\centering
		\begin{tabular}{r l|c}
			\multicolumn{2}{c|}{Labelling restrictions} & Semantics   \\ \hline \hline
			\multicolumn{2}{c|}{no restrictions}        & complete    \\ \hline
			empty                 & \lundec              & stable      \\ \hline
			maximal               & \lin                 & preferred   \\ \hline
			maximal               & \lout                & preferred   \\ \hline
			maximal               & \lundec              & grounded   \\ \hline
			minimal               & \lin                 & grounded   \\ \hline
			minimal               & \lout                & grounded   \\ \hline
			minimal               & \lundec              & semi-stable \\ \hline
		\end{tabular}
		\caption{Reinstatement labelling vs semantics.}
		\label{table:r_lab}
	\end{table}

	\section{Invariant Operators}
	
	%While Argumentation Frameworks (AFs) represent a powerful means to deal with argument-based semantics in static environments, possible evolutions of the supporting knowledge base are usually not taken into account. In this section we present different modifying operators able to make changes in the AF by adding (or removing) an attack between a couple of arguments and for which the semantics is invariant. In this way one can identify the strongest extensions, where strong means to be  more resistant to changes, and possibly to strengthen the weakest ones.
	%%%%claim "usually not taken into account" - other works non fanno cf e adm - e non fanno semantica intera
	%
	%\section{Modifying Operators}
	A change in an AF can consist in addition (or subtraction) of nodes or edges. In this work we focus on modifications concerning attacks between arguments (in particular additions). This kind of transformation coincides with the notion of \emph{local expansion} (\cite{DBLP:journals/ai/Baumann12}).
	Introducing this type of changes in an AF may produce or not alterations on sets of extensions. This behaviour depends on two factors: the semantics we choose in computing the extensions and the change in the framework itself.
	
	After a modification, either a set of arguments is no more acceptable w.r.t.\ to a given semantics, or a new extension is generated, so the semantics of the AF will change in turn. On the contrary, if we consider the case in which extensions are preserved, further non trivial observations can be made for what concerns the semantics of the AF. For instance, even if the subsets of arguments remain unchanged, an admissible set can become also complete, if the right modifications are applied.
	%Let consider a number of arguments n, $\allf$ is the set of all AFs with exactly n arguments and every possible combination of attacks.
	Formally, an operator can be  defined as follows.
	
	\begin{definition}[Local expansion operator]
		Let $G=\langle\mathcal{A},\mathcal{R}\rangle \in \allf$ be an AF. A local expansion operator is a function $m : \allf \rightarrow \allf$ s.t.\ $m(G)=\langle\mathcal{A},m(\mathcal{R})\rangle$, where $m(\mathcal{R}) \supseteq \mathcal{R}$.
	\end{definition}
	
	If we consider those operators taking into account also Dung's semantics, we can study changes in the AFs from the point of view of sets of extensions. An explanatory example is provided in \fig{ver-or}: in the framework $G$, the extension $\{1,2\}$ is both admissible and complete while the extension containing argument $1$ is only admissible. After the modification $m=\{1 \rightarrow 2\}$ consisting in the addition of attack $1 \rightarrow 2$, the extension $\{1,2\}$ in the framework $m(G)$ is no longer admissible. On the other hand, after the change on the relations set, the extension $\{1\}$ that in $G$ was just admissible, also becomes complete in $m(G)$.
	
	\begin{figure}[]
		\centering
		\subfigure[$G$]
		{\includegraphics[width=2.5cm]{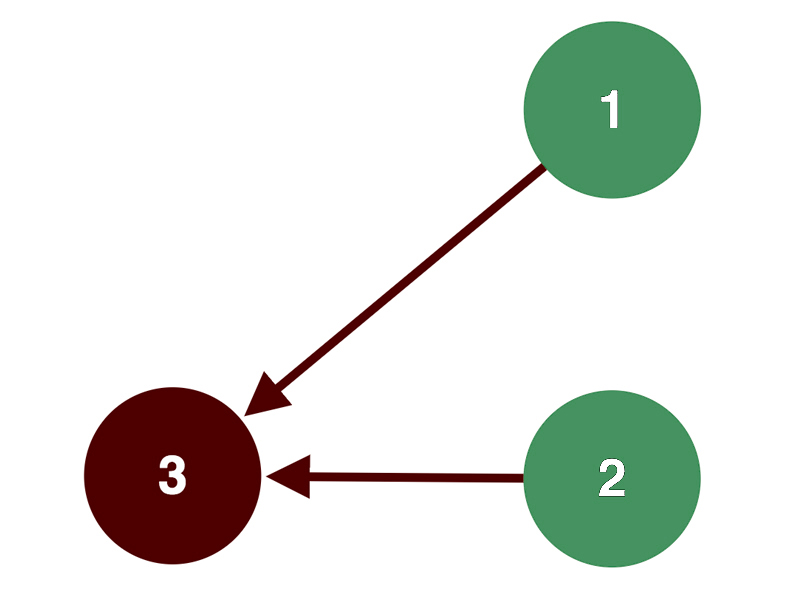}}
		\hspace{10mm}
		\subfigure[$m(G)$]
		{\includegraphics[width=2.5cm]{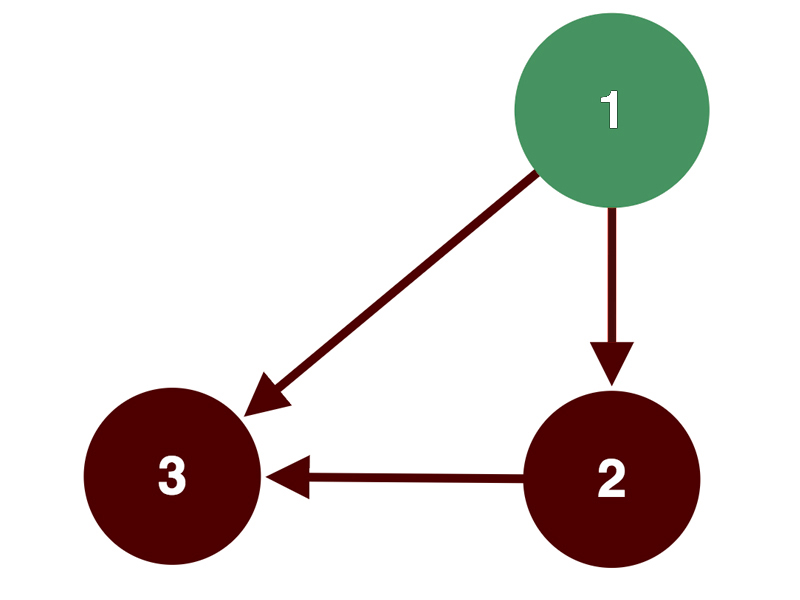}}
		\caption{On the left an AF $G$, on the right the same AF after have been modified by the operator $m$ adding the attack from argument 1 to argument 2. Argument $2$ becomes \lout\ from being \lin.}
		\label{fig:ver-or}
	\end{figure}
	
	\subsection{Semantics Equivalence}
	Our purpose is to find local expansion operators that leave the whole semantics unchanged, so instead of considering changes on the semantics induced by modifications on the graph, we look for the set of allowed changes that leave the semantics unmodified. In this way, we define semantics homomorphisms, namely operators with respect to which the semantics is invariant. In order to preserve the whole semantics, it is necessary to ensure that all the sets will not be modified, hence every set of extensions has to be, in turn, invariant with respect to these operators. We say that if two AFs have the same set of extensions w.r.t.\ a certain semantics, then the two frameworks are equivalent for such semantics.
	
	For this reason, we need the following definitions.
	
	\begin{definition}[Semantics inclusion] \label{seminc}
		Let $S$ and $S'$ be two sets of extensions. We say that $S \subseteq S'$ if and only if $~\forall E \in S ~\exists E' \in S'$ s.t.\ $E \subseteq E'$.
	\end{definition}
	
	\begin{definition}[$\sigma$-equivalence] \label{sequiv}
		Let $G$ and $G'$ be two AFs and $\sigma$ a semantics. We say that
		
		\begin{itemize}
			\item $G\equiv_{\sigma}G'$ if $S_\sigma(G) = S_\sigma(G')$;
			\item $G\sqsubseteq_{\sigma}G'$ if $S_\sigma(G) \subseteq S_\sigma(G')$.
		\end{itemize} 
	\end{definition}
	
	The equivalence we consider is referred as \emph{standard} in~\cite{DBLP:journals/ai/OikarinenW11}. Adding an attack in an AF can have different consequences. The most intuitive one is that the new attacked argument becomes defeated, and so it is forced to be removed from an extension. If we, instead, consider semantics in which the notion of acceptability is taken into account, defeating an argument could lead to accept another argument. In both the cases in which an argument become acceptable or is removed from an extension, the semantics would change. To distinguish the operators that reduce the set of extensions from those that expand it, we give the following definitions.
	
	\begin{definition}[Invariant operators]
		A local expansion operator $m$ is said non-decreasing w.r.t.\ the semantics $\sigma$ if for all the argumentation frameworks \af $\in \allf$, $G\sqsubseteq_{\sigma}m(G)$ and it is said non-increasing if $m(G)\sqsubseteq_{\sigma}G$. If $m$ is both non-decreasing and non-increasing, it is an \emph{invariant}: $G\equiv_{\sigma}m(G)$.
	\end{definition}
	
	The last case may occur when an attack has no effect on the set of extensions. Our purpose is exactly to find local expansion operators that guarantee this last outcome when adding an attack. In the following, an invariant operator will be referred to as $h$.
	
	It is necessary to understand how extensions react to changes in the AF. Since the main issue to deal with is due to the reinstatement, the idea we develop in order to define an invariant operator $m$ is to use the notion of reinstatement labelling. Once the arguments of the AF are labelled (with \lin, \lout\ or \lundec), there are nine ($3^2$) different ways an edge can be added among nodes, according to labels of the source and the target of the attack.
	
	\begin{definition}
		Let \af $\in \allf$ be an AF and $\sigma$ a semantics. The sets of arguments labelled \lin, \lout\ or \lundec\ in at least one extension are respectively:
		
		\begin{itemize}
			\item \lsin{\sigma}{G} $= \bigcup\limits_{S_\sigma(G)} \{a \in \mathcal{A}$ s.t.\ $a \in in(\mathcal{L})\}$;
			\item \lsout{\sigma}{G} $= \bigcup\limits_{S_\sigma(G)} \{a \in \mathcal{A}$ s.t.\ $a \in out(\mathcal{L})\}$;
			\item \lsundec{\sigma}{G} $= \bigcup\limits_{S_\sigma(G)} \{a \in \mathcal{A}$ s.t.\ $a \in undec(\mathcal{L})\}$.
		\end{itemize}
		
	\end{definition}
	
	Note that \lsin{\sigma}{G} coincides with the set of arguments of $G$ credulously accepted w.r.t.\ the semantics $\sigma$. In the following subsections we separately study the conflict-free  and the admissible semantics.
	
	\subsection{Operators for Conflict-Free Sets}
	The conflict-free property is very fragile: introducing a relation between two non conflicting nodes is sufficient to change the conflict-free sets. These sets can only be reduced: no new conflict-free set can be generated after the addition of an attack in the AF. Thus, every operator $m$ able to perform the addition of an edge in a graph $G$ produces another graph $m(G)$ in which the semantics is \quot{smaller} (in the sense that in some extensions of the new AF an argument disappears) or at most equal to the set deriving from $G$. $\mathcal{R}$ is the set of relations belonging to G, while $m(\mathcal{R})$ is the same set after the addition of an attack introduced by $m$. We avoid describing the trivial case in which $m(G) = G$, and we only consider the effective transformation of adding an attack (identical conclusions can be drawn in  case of subtraction).
	
	\begin{proposition} \label{th:cfni}
		Every local expansion operator $m$ is non-increasing w.r.t.\ the conflict-free sets.
	\end{proposition}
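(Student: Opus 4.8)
The plan is to unfold the definition of \quot{non-increasing} and reduce it to a one-line monotonicity observation about the attack relation. Recall that \quot{$m$ is non-increasing w.r.t.\ $S_{cf}$} means $m(G)\sqsubseteq_{cf}G$, i.e.\ by Def.~\ref{sequiv} combined with Def.~\ref{seminc} that for every $E\in S_{cf}(m(G))$ there exists $E'\in S_{cf}(G)$ with $E\subseteq E'$. I would in fact prove the stronger plain set inclusion $S_{cf}(m(G))\subseteq S_{cf}(G)$, from which the required property follows immediately by taking $E':=E$.

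First I would fix an arbitrary $E\in S_{cf}(m(G))$. By definition of conflict-freeness, no two arguments $a,b\in E$ satisfy $(a,b)\in m(\mathcal{R})$. Since $m$ is a local expansion operator we have $\mathcal{R}\subseteq m(\mathcal{R})$, so a fortiori no $a,b\in E$ satisfy $(a,b)\in\mathcal{R}$; that is, $E\in S_{cf}(G)$. This establishes $S_{cf}(m(G))\subseteq S_{cf}(G)$ as sets. Then I would unwind the bookkeeping: given $E\in S_{cf}(m(G))$, the witness $E'=E$ lies in $S_{cf}(G)$ by the previous step and trivially satisfies $E\subseteq E'$, so the inclusion holds in the sense of Def.~\ref{seminc}, i.e.\ $m(G)\sqsubseteq_{cf}G$. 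Since this argument uses nothing about $m$ beyond $m(\mathcal{R})\supseteq\mathcal{R}$, it applies to every local expansion operator.

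There is no genuine obstacle here: the content of the proposition is precisely the elementary remark that enlarging the attack relation can only destroy conflict-freeness, never create it. The only mild care needed is to supply the witness demanded by the (asymmetric) Definition~\ref{seminc}, which is handled by the trivial choice $E'=E$, and to note in passing that the same reasoning read in reverse fails — a set conflict-free in $G$ need not stay conflict-free in $m(G)$ once a new attack is inserted inside it — so that a local expansion operator is in general \emph{not} invariant for $S_{cf}$, only non-increasing.
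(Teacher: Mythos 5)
Your proof is correct and follows the same route as the paper's: the paper's one-line argument ("this comes directly from the definition of conflict-free extension, since $m(\mathcal{R}) \supseteq \mathcal{R}$") is exactly the monotonicity observation you spell out. You merely make explicit the plain set inclusion $S_{cf}(m(G)) \subseteq S_{cf}(G)$ and the trivial witness $E' = E$ required by Definition~\ref{seminc}, which the paper leaves implicit.
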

	
	\begin{proof}
		We have to show that $G \sqsupseteq_{cf} m(G)$ for every local expansion operator $m$, with \af $\in \allf$. This comes directly from the definition of conflict-free extension, since $m$ is such that $m(\mathcal{R}) \supseteq \mathcal{R}$.
	\end{proof}
	
	\begin{corollary}
		Any local expansion operator $m$  which is non-decreasing for conflict-free sets is also invariant: $G\equiv_{cf} m(G)$
	\end{corollary}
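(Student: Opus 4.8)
The plan is to combine Proposition~\ref{th:cfni} with the defining conditions of an invariant operator, so there is essentially nothing to compute. First I would invoke Proposition~\ref{th:cfni}, which already tells us that \emph{every} local expansion operator $m$ is non-increasing w.r.t.\ the conflict-free sets, i.e.\ $m(G)\sqsubseteq_{cf} G$ for every \af $\in \allf$. Second, I would use the hypothesis of the corollary, namely that $m$ is non-decreasing for conflict-free sets, i.e.\ $G\sqsubseteq_{cf} m(G)$ for every such $G$.

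Having both $G\sqsubseteq_{cf} m(G)$ and $m(G)\sqsubseteq_{cf} G$, I would appeal directly to the definition of \emph{invariant operator}: an operator that is simultaneously non-decreasing and non-increasing w.r.t.\ a semantics $\sigma$ is invariant, i.e.\ $G\equiv_{\sigma} m(G)$. Specialising to $\sigma = cf$ yields $G\equiv_{cf} m(G)$, which is exactly the claim.

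There is no real obstacle here: the statement is an immediate logical consequence of Proposition~\ref{th:cfni} and the definitions, and the only thing to be careful about is citing Proposition~\ref{th:cfni} in the right direction (it supplies the non-increasing half for \emph{free}, so the extra hypothesis of the corollary only needs to secure the non-decreasing half). One could optionally remark that, in view of Proposition~\ref{th:cfni}, the corollary is really a characterisation: a local expansion operator is invariant for conflict-free sets \emph{if and only if} it is non-decreasing for them.
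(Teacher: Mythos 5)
Your proof is correct and takes exactly the same route as the paper: cite Proposition~\ref{th:cfni} for the non-increasing half, use the hypothesis for the non-decreasing half, and conclude invariance from the definition. The optional remark that this is in fact an equivalence is a nice observation and consistent with the paper's intent.
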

	
	\begin{proof}
		We know from Prop.~\ref{th:cfni} that every local expansion operator $m$ is non-increasing w.r.t.\ the conflict-free sets, that is $G \sqsupseteq_{cf} m(G)$. If $m$ is also non-decreasing, we have $G \sqsubseteq_{cf} m(G)$, and thus it is also invariant ($G\equiv_{cf} m(G)$).
	\end{proof}
	
	We conclude that an operator $m$ preserves the semantics only if it adds attacks between arguments which already were in conflict. We define an invariant operator $h$ for  conflict-free sets with the following theorem.
	
	\begin{theorem}[Invariant for conflict-free sets]
		Let \af $\in \allf$ be an AF. We have $G \equiv_{cf} h(G)$ if and only if $~\forall (a,b) \in h(\mathcal{R})$ 
		
		\begin{itemize}
			\item $(a,b) \vee  (b,a) \in \mathcal{R}$ or,
			\item $a \vee b \in \mathcal{A} \setminus $\lsin{\mathit{cf}}{G}.
		\end{itemize}
		
	\end{theorem}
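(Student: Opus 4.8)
The plan is to prove the biconditional after first translating the quantity $\mathcal{A} \setminus \lsin{\mathit{cf}}{G}$ into something concrete. For the conflict-free semantics $\lsin{\mathit{cf}}{G}$ is exactly the set of credulously accepted arguments, i.e.\ $\bigcup_{E \in S_{cf}(G)} E$; and since every subset of a conflict-free set is conflict-free, an argument $a$ lies in $\lsin{\mathit{cf}}{G}$ precisely when the singleton $\{a\}$ is conflict-free, that is, when $(a,a) \notin \mathcal{R}$. Hence $a \in \mathcal{A} \setminus \lsin{\mathit{cf}}{G}$ iff $a$ is self-attacking in $G$, and the statement then reads: $h$ is invariant for conflict-free sets iff every attack left in $h(\mathcal{R})$ either already occurred --- in one of the two directions --- in $\mathcal{R}$, or is incident to a self-attacking argument of $G$.

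For the right-to-left direction, I would first note that $S_{cf}(h(G)) \subseteq S_{cf}(G)$ holds unconditionally: this is immediate from $h(\mathcal{R}) \supseteq \mathcal{R}$ and is the content of Prop.~\ref{th:cfni}. So it remains to show $S_{cf}(G) \subseteq S_{cf}(h(G))$. I would take $E \in S_{cf}(G)$ and suppose, toward a contradiction, that $E \notin S_{cf}(h(G))$; then there are $a,b \in E$ with $(a,b) \in h(\mathcal{R})$. Conflict-freeness of $E$ in $G$ forces $(a,b) \notin \mathcal{R}$, $(b,a) \notin \mathcal{R}$, and (reading $a=b$ when the offending attack is a self-loop) $(a,a),(b,b) \notin \mathcal{R}$, so $a,b \in \lsin{\mathit{cf}}{G}$; but then neither bullet of the hypothesis can hold for $(a,b)$, a contradiction.

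For the left-to-right direction, assume $G \equiv_{cf} h(G)$, i.e.\ $S_{cf}(G) = S_{cf}(h(G))$, and let $(a,b) \in h(\mathcal{R})$ be an attack for which the first bullet fails, i.e.\ $(a,b),(b,a) \notin \mathcal{R}$; I must show that $a$ or $b$ is self-attacking in $G$. If neither is, then $E := \{a,b\}$ (a singleton when $a=b$) has no $\mathcal{R}$-attack among its elements and is therefore conflict-free in $G$, whereas $(a,b) \in h(\mathcal{R})$ makes $E$ fail to be conflict-free in $h(G)$, contradicting $S_{cf}(G) = S_{cf}(h(G))$.

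The whole argument is essentially a matter of unwinding definitions; the one place to be careful --- and the step I would single out as the crux --- is the opening reduction, namely identifying $\mathcal{A}\setminus\lsin{\mathit{cf}}{G}$ with the self-attacking arguments and checking that the degenerate self-loop case $a=b$ is handled uniformly, since there the two bullets of the statement coincide.
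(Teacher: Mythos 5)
Your proof is correct and follows essentially the same route as the paper's: both directions reduce to whether the two endpoints of a newly added attack can occur together in some conflict-free set of $G$. Your version is more explicit than the paper's — in particular the opening observation that, for conflict-free sets, $\mathcal{A}\setminus$\lsin{\mathit{cf}}{G} is exactly the set of self-attacking arguments, and the careful handling of the $a=b$ case — but these are refinements of the same argument rather than a different approach.
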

	
	\begin{proof}
		We show that all conflict-free extensions are preserved if the above condition holds and vice versa.\\
		\quot{$\Longrightarrow$}:\\
		We have an $h$ such that $G \equiv_{cf} h(G)$. If the condition is not satisfied, then it would exists in $h(\mathcal{R})$ a relation between two arguments belonging to the same extension in $S_{cf}(G)$ and so $G \sqsupseteq_{cf} h(G)$. Contradiction.\\
		\quot{$\Longleftarrow$}:\\
		Suppose that the condition hold. If $(a,b) \vee  (b,a) \in \mathcal{R}$, then $a$ and $b$ are already in conflict and do not appear together in any conflict-free extension of $G$. Thus, no extension will be lost in $S_{cf}(h(G))$ when adding an attack between those arguments. On the other hand, having $a \vee b \in \mathcal{A} \smallsetminus$\lsin{cf}{G} means that one of the two arguments is never \lin\ and so there is no way to change the conflict-free extensions set by adding an attack between $a$ and $b$.
	\end{proof}
	
	If we take into account any of the semantics defined by Dung, we can conclude that adding an attack between two arguments belonging to a certain set always requires those arguments to be removed from such set, changing the semantics in turn. Hence, denying attacks between nodes within same set (which, then, do not attack each other in $G$) is a necessary condition in order to leave the semantics unchanged in $h(G)$.
	
	\subsection{Operators for Admissible Sets}
	Contrary to the conflict-free sets, for the admissible ones it is not possible to provide a theorem for the inclusion between semantics without taking into account reinstatement. Since arguments can be defended and consequently accepted with respect to a certain extension, we need to consider different types of interactions in order to find an operator capable of maintaining the semantics unchanged. Reinstatement labelling provides a powerful means to overcome the issue of comprehending how arguments defend each other inside an extension. Indeed, labellings are a more expressive way than extensions to suggest the acceptance of arguments. We exploit the notion of \lin, \lout\ and \lundec\ arguments to define the invariant operator $h$ for the admissible sets. In order to preserve this semantics, we have to guarantee that neither existent extensions will be destroyed, nor new one will be created. To achieve this, an operator $h$ has to ensure that extensions in the set remain conflict-free, \lin\ arguments are not defeated from outside and \lout\ and \lundec\ arguments do not become acceptable.
	We distinguish between modifications that reduce the semantics from modifications that enlarge it.
	
	\begin{theorem} \label{a_nd}
		Let \af\ be an AF. A local expansion operator $h$ is non-decreasing w.r.t.\ the admissible set if and only if $~\forall (a,b) \in h(\mathcal{R})$, there does exists a labelling $\mathcal{L}$ of $G$ such that
		
		\begin{itemize}
			\item $a,b \in in(\mathcal{L})$ or
			\item $a \in out(\mathcal{L}), b \in in(\mathcal{L})$, $(b,a) \notin \mathcal{R}$ and $\nexists c \in out(\mathcal{L})$ s.t.\ $(c,b) \in \mathcal{R}$ or
			\item $a \in undec(\mathcal{L}), b \in in(\mathcal{L})$.
		\end{itemize}
		
	\end{theorem}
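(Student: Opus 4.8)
The plan is to prove the biconditional by its two inclusions, working throughout in the labelling language of Section~\ref{reinst}: an admissible set $E$ of an AF corresponds exactly to a labelling whose \lin-part is $E$, in which every element of $E$ is legally \lin\ (all attackers \lout), every argument attacked by $E$ is \lout, and everything else is \lundec, and I read ``a labelling of $G$'' in the statement in this admissible sense. Since $h(\mathcal{R})$ exceeds $\mathcal{R}$ by finitely many attacks and the displayed condition is imposed one attack at a time, I would first reduce, by a routine induction on the number of added attacks, to the case where a single attack $(a,b)$ is inserted, and then establish: (a) \emph{sufficiency} --- if $(a,b)$ is witnessed by a labelling of one of the three listed forms, then every $E\in S_{adm}(G)$ is still contained in some member of $S_{adm}(h(G))$, so $G\sqsubseteq_{adm}h(G)$; and (b) \emph{necessity} --- if no witnessing labelling of any of the three forms exists for $(a,b)$, then $G\not\sqsubseteq_{adm}h(G)$.

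For (a) I would fix an arbitrary $E\in S_{adm}(G)$ together with its labelling and split on which of the three configurations the witnessing labelling $\mathcal{L}$ realises; in every configuration $b$ is \lin\ under $\mathcal{L}$, so the two things that can break when $a\to b$ is added are (i) conflict-freeness of a preserved admissible set and (ii) defence of a member of a preserved admissible set, the only new attacker anywhere being $a$ against $b$. The configurations with $a\in out(\mathcal{L})$ or $a\in undec(\mathcal{L})$ are the cleaner ones: the side conditions ($(b,a)\notin\mathcal{R}$ and no $c\in out(\mathcal{L})$ attacking $b$, respectively $a$ \lundec) say precisely that $a$ was not already attacking $b$ and that inserting the attack neither makes any set built on $\mathcal{L}$ conflictful nor demands fresh defence for $b$, so the relevant admissible sets survive, enlarged if necessary; the configuration $a,b\in in(\mathcal{L})$ is the delicate one and is really the crux of the theorem, where I would have to argue that the only effect of the new attack is to remove an edge internal to a set that was already admissibly available, leaving $S_{adm}$ untouched.

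For (b) I would argue by contraposition, enumerating the $3^2$ label-pairs for $(a,b)$, discarding the three allowed ones, and for each remaining pattern exhibiting a small gadget AF $G$ realising that pattern together with an admissible set $E$ that in $h(G)$ either fails conflict-freeness or loses a defender with no counter-attack available and that is dominated by no admissible set of $h(G)$ --- hence $E\in S_{adm}(G)$ has no superset in $S_{adm}(h(G))$ and $G\not\sqsubseteq_{adm}h(G)$. The step I expect to be the genuine obstacle, in both directions, is the defence bookkeeping: since inserting $a\to b$ alters which arguments are credulously accepted, one must show that a single witnessing labelling per added attack really certifies \emph{simultaneously} that every admissible set touched by that attack is still absorbed --- in particular that an argument \lout\ or \lundec\ in all extensions of $G$ does not quietly become acceptable in $h(G)$ --- and, for the induction, that an attack harmless relative to $G$ stays harmless after the earlier attacks of $h$ have been inserted. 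Pinning these ``no side effects'' claims down, rather than the case split itself, is where the effort lies.
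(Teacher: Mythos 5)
Your reading of the right-hand side has the polarity inverted, and this sinks the whole plan. Although the statement literally says ``there does exists a labelling'', the paper's own proof and Corollary~\ref{cor_adm} (which merges this list with that of Theorem~\ref{a_ni} under a single ``there does not exists'') make clear that the three bullets are the \emph{forbidden} configurations: $h$ is non-decreasing iff \emph{no} admissible labelling of $G$ realises any of them for an added attack. Two observations force this reading. First, the quantifier ranges over all of $h(\mathcal{R})\supseteq\mathcal{R}$, and no pre-existing attack $(a,b)\in\mathcal{R}$ can satisfy any bullet (each requires $b\in in(\mathcal{L})$, which forces every attacker of $b$, including $a$ itself, to be \lout), so under your reading even the identity operator would fail the criterion on any AF with a non-empty attack relation. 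Second, and fatally for your part (a), the configuration $a,b\in in(\mathcal{L})$ that you single out as ``the crux'' does not leave $S_{adm}$ untouched: it witnesses an admissible set $E$ of $G$ containing both $a$ and $b$, and after adding $a\rightarrow b$ neither $E$ nor any superset of $E$ is conflict-free, so by Definition~\ref{seminc} $E$ has no superset in $S_{adm}(h(G))$ and $G\not\sqsubseteq_{adm}h(G)$. The minimal counterexample is $G$ with two unattacked arguments $a,b$ and $h$ adding $a\rightarrow b$.

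The argument the paper actually runs is the mirror image of yours: for one direction it assumes a forbidden labelling exists and exhibits the admissible set that is destroyed (conflict-freeness lost when $a$ and $b$ are both \lin; defence of $b$ lost when $a$ is \lundec\ and $b$ is \lin; the singleton $\{b\}$ lost in the \lout/\lin\ case under the stated side conditions); for the other direction it checks that every remaining label pattern for $(a,b)$ is harmless, because either $b$ is not \lin\ in the labelling (so the corresponding admissible set does not contain $b$) or $a$ is already defeated by the sets containing $b$. If you flip the polarity, your part (b) gadget enumeration becomes the first direction and your part (a) case split becomes the second, but the content of each case has to be redone; as written, the proposal sets out to prove --- and in the $a,b\in in(\mathcal{L})$ case cannot prove, because it is false --- the wrong biconditional.
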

	
	\begin{proof}
		We have to show that if $G\sqsubseteq_{adm}h(G)$ then the condition holds and vice versa.\\
		\quot{$\Longrightarrow$}:\\
		An operator $h$ is non-decreasing w.r.t.\ admissible sets. Suppose that there exists an attack relation $(a,b) \in h(\mathcal{R})$ such that in some labelling $\mathcal{L}$ of $G$ we have $a,b \in in(\mathcal{L})$ or $a \in undec(\mathcal{L}), b \in in(\mathcal{L})$. In both these cases, the admissible extension corresponding to the labelling $\mathcal{L}$ is lost in $h(G)$ and thus $G\sqsupseteq_{adm}h(G)$, so we have a contradiction. In the case that $a \in out(\mathcal{L})$ and $b \in in(\mathcal{L})$ and $(b,a) \notin \mathcal{R}$, if $(b,a) \notin \mathcal{R}$ and $\nexists c \in out(\mathcal{L})$ s.t.\ $(c,b) \in \mathcal{R}$, then the extension containing the only argument $b$ would be lost. Contradiction.\\
		\quot{$\Longleftarrow$}:\\
		If the condition holds, it is not possible that an extension in $S_{adm}(G)$ is also in $S_{adm}(h(G))$. Consider any labelling $\mathcal{L}$ of G. If $a$ is \lin\ or \lundec\ and $b$ is not \lin, then the addition of an attack $a \rightarrow b$ cannot make an admissible extension of $G$ to become unacceptable in $S_{adm}(h(G))$. If instead $a$ is \lout, it means that it is already defeated, so every argument $b$ belonging to some admissible extension of $G$ remains acceptable w.r.t.\ such extension also in $h(G)$. 
	\end{proof}
	
	\begin{theorem} \label{a_ni}
		Let \af\ be an AF. A local expansion operator $h$ is non-increasing w.r.t.\ the admissible set if and only if $~\forall(a,b) \in h(\mathcal{R})$, there does not exists a labelling $\mathcal{L}$ of $G$ such that
		
		\begin{itemize}
			\item $a,b \in in(\mathcal{L})$ and $\exists c \in out(\mathcal{L})$ s.t.\ $(a,c) \notin \mathcal{R}$ and $(b,c) \in \mathcal{R}$ or
			\item $a \in in(\mathcal{L}), b \in out(\mathcal{L})$ and $\exists c \in in(\mathcal{L})$ s.t.\ $(b,c) \in \mathcal{R}$ or
			\item $a \in in(\mathcal{L}), b \in undec(\mathcal{L})$ and $\exists c \in undec(\mathcal{L})$ s.t.\ $(c,c) \notin \mathcal{R}$ and $(b,c) \in \mathcal{R}$ or
			\item $a \in out(\mathcal{L})$, there is an odd length sequence of attacks from $b$ to $a$ and $\nexists c \neq b$ s.t.\ there is an odd length sequence of attacks from $c$ to $a$ but not from $a$ to $c$.
		\end{itemize}
		
	\end{theorem}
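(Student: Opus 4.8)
The plan is to prove both directions by contraposition, after reducing the statement to a single question about reinstatement. Recall that $h$ being non-increasing w.r.t.\ the admissible sets means $S_{adm}(h(G)) \subseteq S_{adm}(G)$ in the sense of Def.~\ref{seminc}: every admissible extension of $h(G)$ is contained in an admissible extension of $G$. Since $h$ only adds attacks, any $E$ that is conflict-free in $h(G)$ is also conflict-free in $G$; hence the only way $h$ can fail to be non-increasing is if some set $E$, conflict-free in $G$, becomes self-defending in $h(G)$ \emph{using a new attack} and is not contained in any admissible extension already present in $G$. So the whole proof turns on controlling when a freshly added attack $(a,b)$ can reinstate an argument, and this is exactly what the four bullets enumerate in terms of the labels of $a$ and $b$ in a (complete) labelling $\mathcal{L}$ of $G$, i.e.\ the nine label combinations for the source and target discussed above.

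For the ``$\Longleftarrow$'' direction I would take an arbitrary $E \in S_{adm}(h(G))$ and show it is in fact already admissible in $G$ (so it witnesses its own inclusion). If not, some $d \in E$ has an attacker $b$ in $G$ that $E$ counter-attacks only through a newly added $(a,b)$ with $a \in E$. I would then fix a complete labelling $\mathcal{L}$ of $G$ and split on $\mathcal{L}(a),\mathcal{L}(b)$: if $a$ is \lin\ and $b$ is \lin, \lout, or \lundec, checking the side conditions on the auxiliary argument $c$ (self-attacks, and the precise attack pattern among $a,b,c$) forces respectively bullet~1, bullet~2, or bullet~3, contradicting the hypothesis; if $a$ is \lout\ the fourth bullet applies via the odd-path analysis described below; and if $a$ is \lundec, or if $b$ is \lin, the new attack cannot supply any genuine defence, so no essential new attack of this kind exists. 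Hence $E \in S_{adm}(G)$, as required.

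For ``$\Longrightarrow$'' I would assume $h$ is non-increasing and suppose some added $(a,b)$ and some labelling $\mathcal{L}$ realise one of the four bullets; in each case I would exhibit an explicit $E \in S_{adm}(h(G))$ that is \emph{not} contained in any member of $S_{adm}(G)$, contradicting $S_{adm}(h(G)) \subseteq S_{adm}(G)$. For bullets~1--3 the witness is obtained by a small surgery on $in(\mathcal{L})$ guided by $c$: removing or replacing arguments whose defence, after $b$ loses status, no longer requires the original \lin-attacker of $b$, and checking that the resulting conflict-free set is self-defending in $h(G)$ yet escapes every old admissible extension (this is where the side conditions $(a,c)\notin\mathcal{R}$, $(b,c)\in\mathcal{R}$, and the non-self-attack of $c$ are used). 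For bullet~4 the witness comes from collapsing the odd cycle formed by the odd-length attack sequence $b \rightsquigarrow a$ together with the new edge $a \rightarrow b$, and the uniqueness clause ``$\nexists c \neq b$ with an odd sequence from $c$ to $a$ but not from $a$ to $c$'' is what guarantees the resulting extension is new rather than one already available in $G$ through a competing odd structure.

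I expect bullet~4 to be the main obstacle. Unlike bullets~1--3, here the new edge $a\rightarrow b$ emanates from an \lout\ argument, so it merely adds a defeated attacker to $b$ and changes no label locally; its effect is global, arising from the odd cycle it closes, inside which reinstatement status flips. Making precise that this, and only this configuration, creates a genuinely new admissible set — and not merely a subset of an old one, which Def.~\ref{seminc} would tolerate — will require an induction along the odd attack path together with an argument that no other odd structure already realised the same extension in $G$. A secondary delicate point is the choice, in the ``$\Longleftarrow$'' direction, of a complete labelling $\mathcal{L}$ of $G$ to which the endpoints of the essential new attack can be related, since the offending $E$ need not itself extend to a complete extension of $G$; I would address this by taking $\mathcal{L}$ compatible with a maximal admissible subset of $E$ in $G$ and tracking how the new attack interacts with it. The remaining bookkeeping (the $a,b,c$ side conditions in bullets~1--3) is routine by comparison, provided it is matched exactly against the case analysis of the converse direction.
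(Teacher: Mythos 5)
Your proposal follows essentially the same route as the paper's own proof: both directions are argued by contradiction via a case analysis on the reinstatement labels of the endpoints $a,b$ of each newly added attack, with the forward direction exhibiting a freshly reinstated admissible set and the backward direction checking that none of the permitted label configurations can generate one. The steps you flag as deferred (the odd-path induction for the fourth bullet, the explicit witness surgery for the first three, and the choice of labelling compatible with the offending extension) are precisely the points the paper itself dispatches with one-sentence assertions, so your sketch is at least as detailed and careful as the published argument.
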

	
	\begin{proof}
		We show evidence that no new admissible extensions are generated for $G$ applying the operator $h$ if the conditions of the theorem are satisfied and vice versa.\\
		\quot{$\Longrightarrow$}:\\
		Suppose that $h(G)\sqsubseteq_{adm}G$. If there exists a labelling $\mathcal{L}$ for which $a,b \in in(\mathcal{L})$ and $\exists c \in out(\mathcal{L})$ s.t.\ $(a,c) \notin \mathcal{R}$ and $(b,c) \in \mathcal{R}$ then arguments $a$ and $c$ would become acceptable together, forming a new admissible extension. The same would happen whenever $a \in in(\mathcal{L}), b \in out(\mathcal{L})$ and $\exists c \in in(\mathcal{L})$ s.t.\ $(b,c) \in \mathcal{R}$ or in the case $a \in in(\mathcal{L}), b \in undec(\mathcal{L})$ and $\exists c \in undec(\mathcal{L})$ s.t.\ $(c,c) \notin \mathcal{R}$ and $(b,c) \in \mathcal{R}$. If instead the last condition does not hold, then $a$ would be defended from all the incoming attacks and so it would be accepted in some admissible extension of $h(G)$. In all these cases we reach a contradiction.\\
		\quot{$\Longleftarrow$}:\\
		We will see that if the conditions hold, it is not possible that a new admissible extension can be generated. For every labelling $\mathcal{L}$ of $G$, a non-increasing operator $h$ is allowed to add an attack between arguments $a$ and $b$ only in the following cases:
	
		\begin{enumerate}
			\item $a,b \in in(\mathcal{L})$ and $\nexists c \in out(\mathcal{L})$ s.t.\ $(a,c) \notin \mathcal{R}$ and $(b,c) \in \mathcal{R}$;
			\item $a \in in(\mathcal{L}), b \in out(\mathcal{L})$ and $\nexists c \in in(\mathcal{L})$ s.t.\ $(b,c) \in \mathcal{R}$;
			\item $a \in in(\mathcal{L}), b \in undec(\mathcal{L})$ and $\nexists c \in undec(\mathcal{L})$ s.t.\ $(c,c) \notin \mathcal{R}$ and $(b,c) \in \mathcal{R}$;
			\item \label{case4} $a \in out(\mathcal{L})$ and there is no odd length sequence of attacks from $b$ to $a$;
			\item \label{case5} $a \in out(\mathcal{L})$ and $\nexists c \neq b$ s.t.\ there is an odd length sequence of attacks from $c$ to $a$ but not from $a$ to $c$.
			\item $a \in undec(\mathcal{L})$.
		\end{enumerate}
	
		Case \ref{case4} means that $b$ is not responsible for $a$ being \lout, so the attack $a \rightarrow b$ is not sufficient to make $a$ acceptable in a new admissible extension. In case \ref{case5}, even if $a$ defeats $b$, it will not become admissible without also defeating $c$. In all remaining cases no arguments can be defended by $a$ (neither itself), so no new admissible extensions will form. 
	\end{proof}
	
	Given Th.~\ref{a_nd} and Th.~\ref{a_ni}, the following holds.
	\begin{corollary}\label{cor_adm}
		
		Let \af\ be an AF. A local expansion operator $h$ is invariant w.r.t.\ the admissible set, and we write $G\equiv_{adm} m(G)$, if and only if $~\forall(a,b) \in h(\mathcal{R})$, there does not exists a labelling $\mathcal{L}$ of $G$ such that
		
		\begin{itemize}
			\item $a,b \in in(\mathcal{L})$, or
			\item $a \in in(\mathcal{L}), b \in out(\mathcal{L})$ and $\exists c \in in(\mathcal{L})$ s.t.\ $(b,c) \in \mathcal{R}$, or
			\item $a \in in(\mathcal{L}), b \in undec(\mathcal{L})$ and $\exists c \in undec(\mathcal{L})$ s.t.\ $(c,c) \notin \mathcal{R}$ and $(b,c) \in \mathcal{R}$, or
			\item $a \in out(\mathcal{L}), b \in in(\mathcal{L})$, $(b,a) \notin \mathcal{R}$ and $\nexists c \in out(\mathcal{L})$ s.t.\ $(c,b) \in \mathcal{R}$, or
			\item $a \in out(\mathcal{L})$, there is an odd length sequence of attacks from $b$ to $a$ and $\nexists c \neq b$ s.t.\ there is an odd length sequence of attacks from $c$ to $a$ but not from $a$ to $c$, or
			\item $a \in undec(\mathcal{L}), b \in in(\mathcal{L})$.
		\end{itemize}

	\end{corollary}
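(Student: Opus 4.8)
The statement is a direct consequence of the definition of invariant operator combined with Theorems~\ref{a_nd} and~\ref{a_ni}, so the proof is essentially bookkeeping. The plan is as follows. By definition, $h$ is invariant w.r.t.\ the admissible set exactly when it is simultaneously non-decreasing and non-increasing, i.e.\ $G\sqsubseteq_{adm}h(G)$ and $h(G)\sqsubseteq_{adm}G$. So I would simply conjoin the two already-established characterisations. Theorem~\ref{a_nd} tells us that $h$ is non-decreasing iff, for every $(a,b)\in h(\mathcal{R})$, no labelling $\mathcal{L}$ of $G$ realises any of its three ``destructive'' configurations ($a,b\in in(\mathcal{L})$; or $a\in out(\mathcal{L}),b\in in(\mathcal{L})$ with $(b,a)\notin\mathcal{R}$ and no $c\in out(\mathcal{L})$ attacking $b$; or $a\in undec(\mathcal{L}),b\in in(\mathcal{L})$). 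Theorem~\ref{a_ni} tells us that $h$ is non-increasing iff, for every $(a,b)\in h(\mathcal{R})$, no labelling realises any of the four ``generative'' configurations listed there.

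The second step is the elementary logical manipulation. Being invariant means avoiding destruction \emph{and} avoiding generation; since each of these requirements is already phrased as ``for every attack in $h(\mathcal{R})$, no labelling of $G$ realises a forbidden pattern'', their conjunction is again of this shape, with the forbidden patterns being the \emph{union} of the two lists. Concretely, I would derive that $G\equiv_{adm}h(G)$ holds iff for every $(a,b)\in h(\mathcal{R})$ there is no $\mathcal{L}$ realising any of the (at most seven) configurations obtained by merging the list of Theorem~\ref{a_nd} with the list of Theorem~\ref{a_ni}. One should also observe that the negated-existential side-conditions inside the individual bullets (the $\nexists c$ clauses) are inherited verbatim from the corresponding theorem, since ``no labelling realises pattern $P$ subject to side-condition $Q$'' is precisely the clause supplied there.

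The only point requiring a little care --- and the step I would flag as the main obstacle, though it is light --- is checking that the merged seven-item list collapses to exactly the six bullets of the corollary. The key observation is that the configuration ``$a,b\in in(\mathcal{L})$ and $\exists c\in out(\mathcal{L})$ with $(a,c)\notin\mathcal{R}$ and $(b,c)\in\mathcal{R}$'' coming from Theorem~\ref{a_ni} is a strict refinement of the configuration ``$a,b\in in(\mathcal{L})$'' coming from Theorem~\ref{a_nd}: any labelling realising the former also realises the latter, so forbidding the latter already forbids the former and the extra clause on $c$ may be dropped from the union. After this single simplification, the remaining cases --- $a,b\in in(\mathcal{L})$; the $in/out$ case with $\exists c\in in(\mathcal{L})$ attacked by $b$; the $in/undec$ case with a non-self-attacking $c\in undec(\mathcal{L})$ attacked by $b$; the $out/in$ case with $(b,a)\notin\mathcal{R}$ and no $c\in out(\mathcal{L})$ attacking $b$; the odd-length-sequence case with $a\in out(\mathcal{L})$; and the $undec/in$ case --- are exactly the six items of the statement. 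No new argumentation-theoretic reasoning beyond Theorems~\ref{a_nd} and~\ref{a_ni} is needed; the work is entirely in the De~Morgan-style recombination and the one subsumption check.
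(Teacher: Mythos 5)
Your proof is correct and follows essentially the same route as the paper's: invariance is by definition the conjunction of non-decreasing and non-increasing, so the forbidden configurations are simply the union of the lists in Theorems~\ref{a_nd} and~\ref{a_ni}. You are in fact slightly more careful than the paper's own (very terse) proof, since you make explicit both the reading of Theorem~\ref{a_nd}'s ``does exists'' as the intended ``does not exist'' and the subsumption of the first condition of Theorem~\ref{a_ni} under the condition $a,b\in in(\mathcal{L})$ of Theorem~\ref{a_nd}, which is exactly why the seven merged items collapse to the six bullets of the corollary.
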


	\begin{proof}
		The proof of this corollary is straightforward and comes from the proofs of Th.~\ref{a_nd} and Th.~\ref{a_ni}. In particular, if a labelling $\mathcal{L}$ of $G$ satisfying the properties above does not exists, then the local expansion operator $h$ is both non-decreasing (for Th.~\ref{a_nd}) and non-increasing (for Th.~\ref{a_ni}) w.r.t.\ the admissible semantics. Then $h$ is invariant w.r.t.\ the admissible semantics, because the modification on the set of relations does not allow any change in the semantics. Vice versa, if $h$ is invariant w.r.t. the admissible semantics, then $G \sqsubseteq_{adm} h(G)$ and $G \sqsupseteq_{adm} h(G)$  must hold. If a labelling exists such that at least one of the given properties is satisfied, then $h$ could be neither non-decreasing, nor non-increasing (or both), according to Th.~\ref{a_nd} and Th.~\ref{a_ni}. Thus, such a labelling can not exist.
	\end{proof}

	Below, we provide an example of how the conditions given in~\ref{cor_adm} allow to modify an AF without changing its semantics.
	
	\begin{example}
		Consider the AF in Fig.~\ref{fig:ex_cor}. The following attacks does not preserve the admissible semantics.
		\begin{itemize}
			\item $(1,4)$: since both 1 and 4 belong to $in(\mathcal{L})$, the extension $\{1,4\}$ would be deleted from $S_{adm}(G)$;
			\item $(4,2)$: 4 would defend 3 from 2, so the extension $\{3,4\}$ would be accepted in $S_{adm}(G)$;
			\item $(2,4)$: 4 does not attack 4 and it is not attacked by any other \lout\ node, so the extension $\{4\}$ would be deleted from $S_{adm}(G)$;
			\item $(2,1)$: the only odd length sequence of attacks toward 2 comes from 1, so 2 would defend itself from 1, generating the admissible extension $\{2\}$.
		\end{itemize}
	\end{example}

	\begin{figure}[b]
		\centering
		\includegraphics[width=6cm]{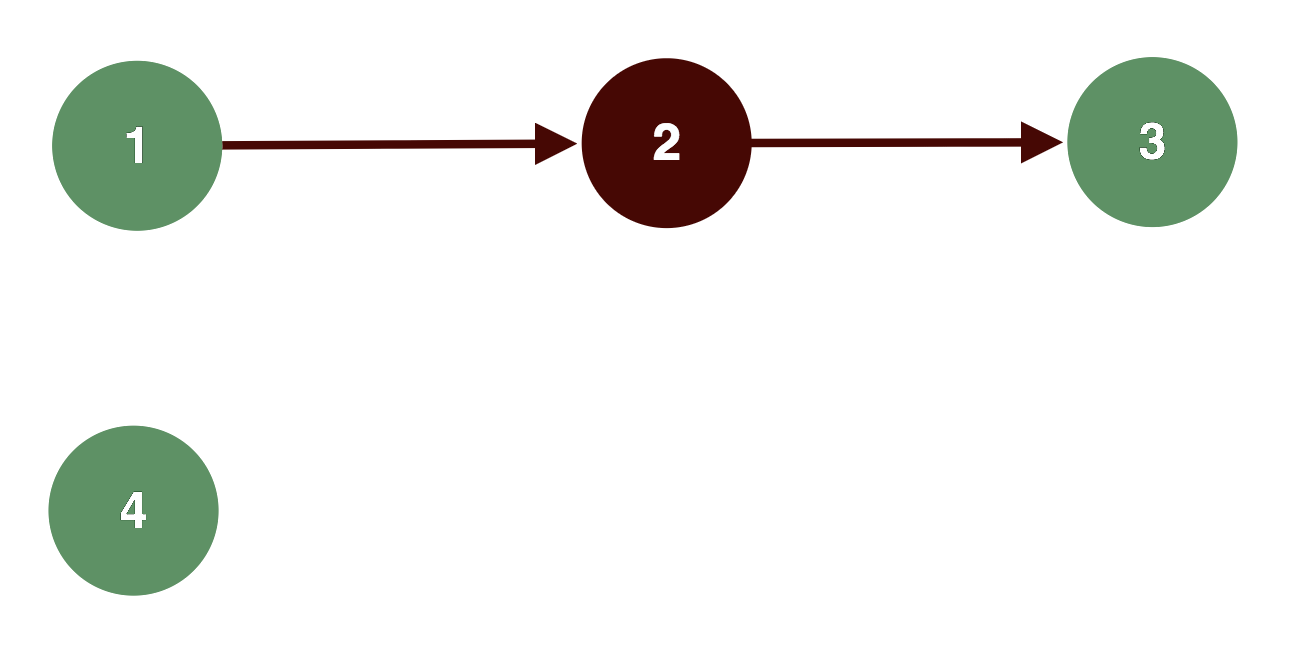}
		\caption{Example of an AF $G$ with $in(\mathcal{L}) = \{1,3,4\}$ and $S_{adm}(G)= \{ \{\},\{4\},\{1\},\{1,4\},\{1,3\},\{1,3,4\}\}$.}
		\label{fig:ex_cor}
	\end{figure}

	\begin{remark}
		In order to determine if an operator $m$ is invariant w.r.t.\ the admissible semantics, it is sufficient to consider only labelling in which $in(\mathcal{L})$ is maximal, that is the preferred extensions.
	\end{remark}
	
	In fact, in extensions that are not maximal, some arguments remain labelled \lundec\ even if they have different labels in more inclusive extensions (w.r.t.\ set inclusion). Thus, looking directly at the most inclusive extension allows for establishing rules able to preserve all the sets.
	
	Invariant operators can be used as a metric to measure the robustness of AFs. The idea is that, starting from $G$, different invariant operators can be applied in sequence, until no more $h$ exists for the last obtained AF: for example $h_4(h_3(h_2(h_1(G))))$ and no $h_5$ exists (as in Fig.~\ref{fig:ex_a}). Thus, the more operations are allowed for a framework, the more difficult it will be to change the extensions set for such semantics. We define the expansion-based robustness of a graph for a generic $\sigma$ as follows.
	
	\begin{figure}[]
		\centering
		\subfigure[$G$]
		{\includegraphics[width=2.7cm]{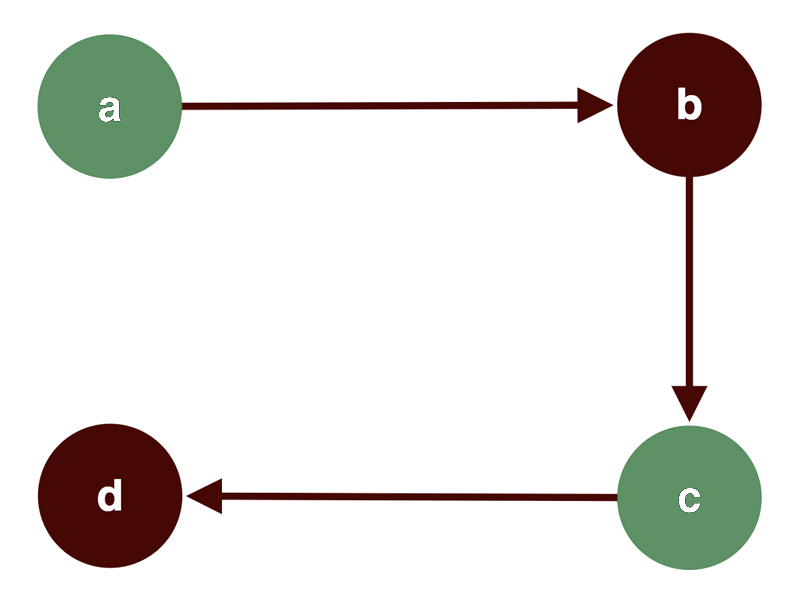}}
		\subfigure[$h(G)$]
		{\includegraphics[width=2.7cm]{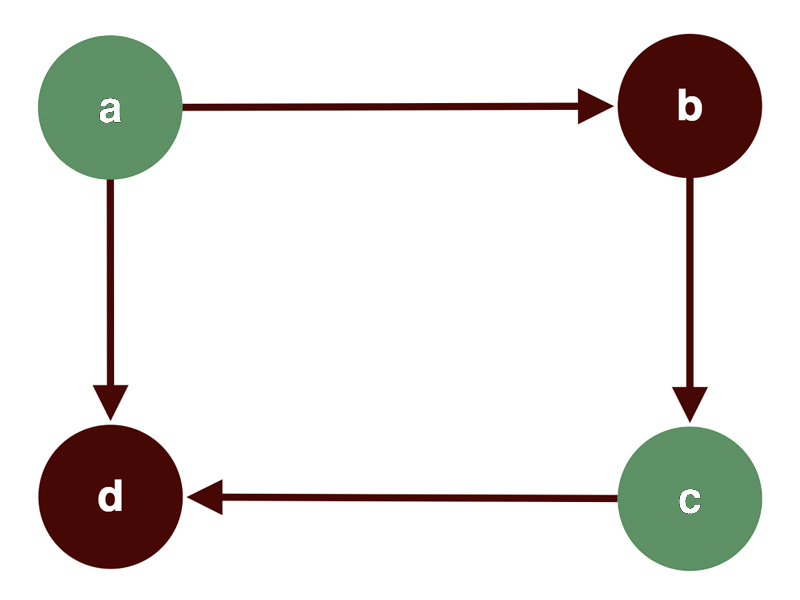}}
		\subfigure[$h(h(G))$]
		{\includegraphics[width=2.7cm]{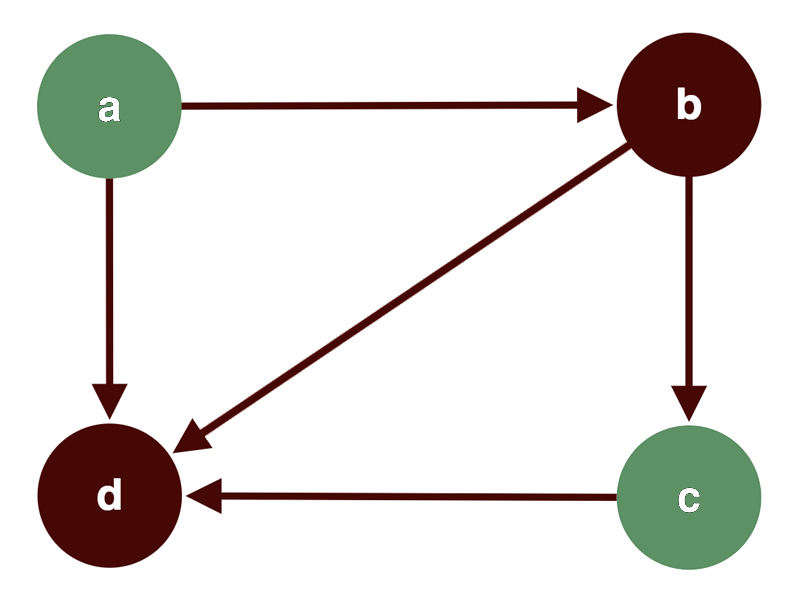}}
		\subfigure[$h(h(h(G)))$]
		{\includegraphics[width=2.7cm]{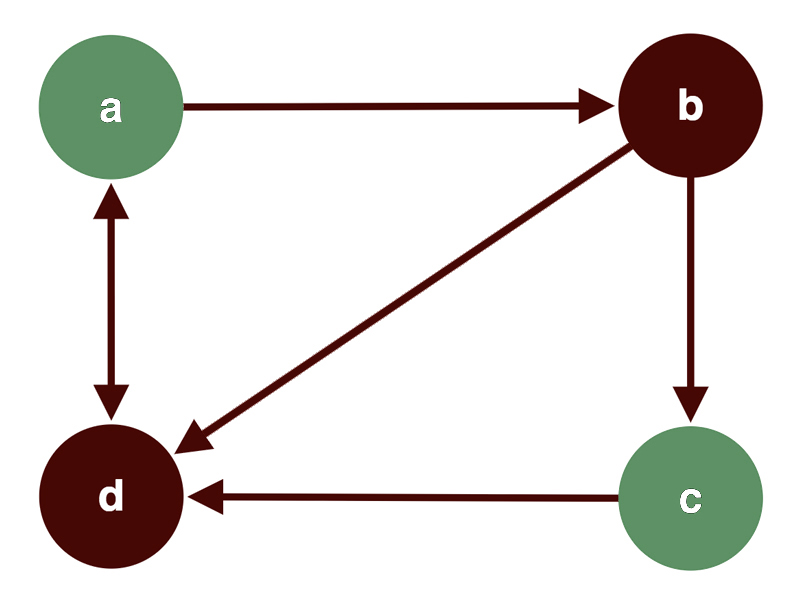}}
		\subfigure[$h(h(h(h(G))))$]
		{\includegraphics[width=2.7cm]{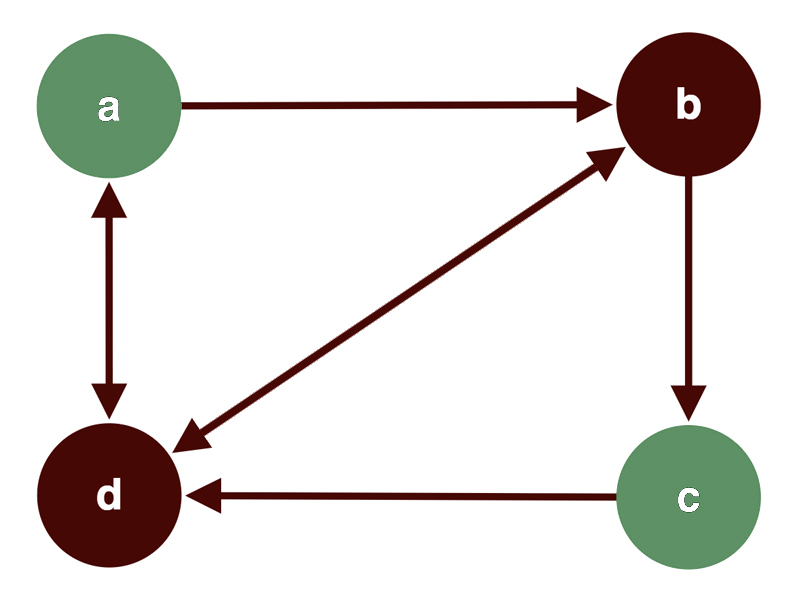}}
		\caption{From figure (a) to (e) the AFs obtained starting from $G$ and each time adding an attack through invariant operator $h$: $in(\mathcal{L}) = \{a,c\}$ persists.}
		\label{fig:ex_a}
	\end{figure}
	
	\begin{definition}[Local-expansion robustness]
		The local-expansion robustness degree of an AF \af w.r.t.\ a semantics $\sigma$ is measured as the maximum number $k$ of invariant operators $h_i$ that can be applied on $G$ s.t.\ $G\equiv_{\sigma} h_k(h_{k-1}(\dots(h_1(G)\dots))$.
	\end{definition}
	
	\section{Related Work}
	
	In the following we review the most meaningful works related to what presented in this paper.
	
	Rienstra et al. \cite{DBLP:conf/tafa/RienstraST15} focus on finding conditions under which the evaluation of an AF remains unchanged when an attack is added or removed. The authors  consider grounded, complete, preferred, stable and semi-stable semantics and, for each of them, a set of properties for which extensions are preserved is given. Those properties are in the form: \quot{given a certain labelling, attacks between two arguments with labels X and Y respectively are allowed (or not) for the semantics $\sigma$}. Invariance is intended w.r.t.\ a single extension and not w.r.t.\ the whole semantics (as we do). %Given conditions works in two directions: existent extensions cannot be cancelled and new extensions cannot be created. In the latter case, invariant properties are defined only for arguments which have the sale labels in all $\sigma$ labelling.
	
	The problem of finding principles stating whether an extension does not change after adding/removing an attack between two arguments is also addressed by Boella et al. in \cite{DBLP:conf/ecsqaru/BoellaKT09} and \cite{DBLP:conf/argmas/BoellaKT09}. Differently from us, the authors consider only the case in which the semantics of an AF contains exactly one extension, using the grounded semantics as example. %Other than property that only use labels of the two arguments to establish if a certain change in the AF will preserve the extension, also conditional properties are defined. %The idea is to look for the reason why certain arguments are accepted (or rejected) in order to maintain them that way. These conditions take into account arguments attacked and defended by the extension and odd cycles in the AF which involve the considered nodes. For each of these principles is then stated if they are satisfied by the grounded semantics.\\
	
	Cayrol et al \cite{DBLP:journals/jair/CayrolSL10} studied the impact on the evaluation of an AF when new arguments and attacks are added. They define a number of properties for the change operations according to how the extensions are modified. For instance, a change operation can be \quot{conservative} if the set of extensions is the same after a change. %Then, they find connections between these properties and characterize the addition of an argument under the ground and preferred semantics, determining conditions that leads to certain type of changes. %In this way, one is able to choose the right way of changing in order to obtain a desired outcome. 
	
	In \cite{cayrol} is addressed the problem of revising AFs when a new argument is added. In particular, they focus on the impact of new arguments on the set of initial extensions, introducing various kinds of revision operators that have different effects on the semantics. For instance, \emph{Decisive revision}, as its name suggests, allows for making a decision by providing a revised extensions set with a unique non-empty extension.% On the other hand, the \emph{Questioning revision} brings additional information to the framework, in such a way that the new extensions set has strictly more extensions than the initial one. %Another interesting case of study is the \emph{Expansive revision} in which the number of extensions remain unchanged, but each new extension is a superset of one of those existed before the revision. Differently from our work, revisions described above take into account changes on the AF in terms of addition of a node (maintaining unchanged attack relations among the initial set of arguments). Nevertheless, the introduced concepts could be associated to a relaxed version of robustness, in which changes on the extensions set are allowed within a certain threshold.\\
	
	In \cite{DBLP:conf/comma/BaumannB10} the problem of revising argumentation frameworks according to acquisition of new knowledge is taken into account. %This problem came directly from the dynamic nature of argumentation, in which the extensions change according to the underlying semantics. With this consideration, 
	While attacks among the old arguments remain unchanged, new arguments and attacks among them can be added. In particular, the authors introduce the notion of \emph{enforcing}, namely the process of modifying an AF (and possibly changing its semantics) in order to obtain a desired set of extensions. This notion departs from our work, in which we instead look for operations that leave the semantics unchanged. %Since not all desired sets are enforceable, the authors provide some possibility and impossibility results depending on the type of expansion performed on the AF. %Three types of expansion are singled out in the paper: an expansion an AF can be a normal, strong or weak expansion respectively if there are no restrictions on the allowed attacks, none of the new arguments is attacked by an old one, none of the new arguments attacks an old one.\\
	
	Also Baumann introduces the concepts of update and deletion~\cite{DBLP:conf/ecai/Baumann14}, focusing on modifications that retract arguments and attacks form an AF. New notions of equivalence are characterized through the so called kernels, namely functions that delete redundant attacks from a given framework. We instead concentrate on devising operators that permit both to modify AFs without changing their semantics, and to give a measure of how robust is a given AF, w.r.t.\ changes on the attack relations set.
	
	The concept of \emph{desire set} is also studied by Boella et al. in \cite{boella} with a work on persuasion in multi-agent systems, addressing the problem of choosing arguments to add into a system in order to maximise their acceptability w.r.t.\ the receiving agent. To this purpose, the notion of \quot{more appealing} argument is introduced: in making the choice of a belief to add, an argument is more appealing than another if it does not interact with previous goals and beliefs of the agent. This is in contrast with our goal that consists in keeping unaltered the set of extension. %Indeed, in the proposed model, the belief base is made of arguments and agents believe the propositions which are supported by the arguments of the grounded extension of their argument base. %Agents have desire sets, namely goals (which can also be inconsistent) that can be reached through a revision process taking into account the notion of appeal of an argument. This approach is useful when agents have to interact with human users who act in a non fully rational way.\\
	
	The authors of \cite{DBLP:conf/tafa/CroitoruK13} show that every AF can be augmented in a normal form preserving the semantic properties. In such normal form no argument attacks a conflicting pair of arguments. A $\sigma$-\emph{augmentation} is an alteration of an AF that leaves unchanged the semantics $\sigma$. The changes in the AF can involve arguments (the only allowed operation is the addition) and attacks. %Note that it is not necessary for the augmented attacks set to be a superset of the original one: old attacks can be split by inserting a new node in the middle of the edge. From the definition, comes that normal forms are general instances of bipartite AF, further reason that makes this type of graphs (bipartite indeed) interesting. The authors give a method for constructing such augmentation: the idea is to add new arguments that cannot attack, and cannot be attacked by, two attacking arguments already in the AF. 
	%While this construction policy is useful for modelling on-line debate, it is not suitable for deductive AF, where the arguments explains the nature of the attacks. %(indeed the proposed procedure changes the attack structure of the graphs). 
	Due to the process through which is obtained, the normal form there proposed is not a representative for the isomorphism classes we use to construct our lattice. %In fact, adding new nodes and edges causes the graph to be different from the others in its class (different in the sense that there is no more an isomorphism between the graph in normal form and the other graphs in the class). Moreover, the normal form of an AF is not unique: one can always add a new node to an AF in normal form (keeping in mind the method given above) and yet obtain an AF in normal form.
	
	A different, more restrictive, kind of equivalence is introduced in~\cite{DBLP:journals/ai/OikarinenW11}: two AFs $G$ and $G'$ are considered strongly equivalent to each other when they are equivalent after the conjunction with a third AF $H$. Since our intent is to provide a method for building equivalent AFs through the addition/deletion of attacks on a same framework, the notion of standard equivalence results to be more fitting than the strong equivalence.
	
	\section{Conclusions and Future Work}
	
	We defined invariant operators for AFs w.r.t.\ the semantics: these operators allow for performing changes on AFs while preserving the semantics. In particular, we have defined two operators, one for the conflict-free and one for the admissible sets, which can be applied to AFs for adding attack relations without resulting in changes to the set of extensions. The operators we have introduced exploit the notion of reinstatement labelling, and thus can be applied without even being aware of the extensions admitted for a given semantics. Moreover we gave the definition for the semantic equivalence between AFs, and we presented a method for computing the expansion-based robustness degree of a framework.
	
	Our study  has a very wide set of future perspectives. First, we plan to design invariant operators w.r.t.\ the complete, stable, semi-stable, preferred and grounded semantics (until now studied only w.r.t.\ single extensions~\cite{DBLP:conf/tafa/RienstraST15}).
	We would like to find the sets of arguments which are essential to preserve the whole semantics. Every change inside those sets modifies the semantics, while changes outside do not cause any alteration. By removing the non-core part of AFs, it is possible to obtain equivalent frameworks for which the computation of extensions is faster, especially for checking credulous/sceptical acceptance of arguments. 
	
	As further work, different notions of equivalence, e.g.\ local equivalence~\cite{DBLP:journals/ai/OikarinenW11}, could be taken into account, and additional modifications of AFs could be considered, as the deletion of attack or the addition/removal of arguments. We also plan to devise a more  general notion of robustness, involving the new modifications proposed above.
	By relaxing the conditions underlying invariant operators, and thus allowing the semantics to change, other operators could be obtained, that allow to reach ``compromises'': if two parts of a debate desire two different outcomes in terms of semantics, a compromise can be reached as a third semantics, that is the closest one w.r.t.\ those desired by both the counterparts. Definitions of closeness could be devised as well.
	Finally, we want to study local expansion operators also for semiring-based weighted AFs~\cite{DBLP:journals/ijar/BistarelliRS18,DBLP:conf/flairs/BistarelliRS16,DBLP:conf/csclp/BistarelliPS09,DBLP:conf/ecai/BistarelliS10}.
	
	\bibliographystyle{aaai}
	\bibliography{references}
	
\end{document}